\pgfplotsset{compat=1.17}
\def\BibTeX{{\rm B\kern-.05em{\sc i\kern-.025em b}\kern-.08em
    T\kern-.1667em\lower.7ex\hbox{E}\kern-.125emX}}
\newtheorem{lemma}{Lemma}
\newtheorem{theorem}{Theorem}
\newtheorem{remark}{Remark}
\newtheorem{proposition}{Proposition}
\newtheorem{proof}{Proof}
\title{Predicting AI Agent Behavior through Approximation of the Perron-Frobenius Operator}
\author{%
  Shiqi~Zhang \\
  Department of Mechanical Engineering \\
   University of California, Riverside\\
  Riverside, CA 92521 \\
  \texttt{shiqiz@ucr.edu} 
 \And 
   Darshan Gadginmath \\
   Department of Mechanical Engineering\\
   University of California, Riverside\\
  Riverside, CA 92521 \\ \texttt{dgadg001@ucr.edu} 
  \AND
   Fabio Pasqualetti
    \\
  Department of Mechanical Engineering \\
   University of California, Riverside\\
  Riverside, CA 92521 \\
  \texttt{fabiopas@ucr.edu} 
}
\begin{document}

\maketitle

\begin{abstract}

Predicting the behavior of AI-driven agents is particularly challenging without a preexisting model. In our paper, we address this by treating AI agents as nonlinear dynamical systems and adopting a probabilistic perspective to predict their statistical behavior using the Perron-Frobenius (PF) operator. We formulate the approximation of the PF operator as an entropy minimization problem, which can be solved by leveraging the Markovian property of the operator and decomposing its spectrum. Our data-driven methodology simultaneously approximates the PF operator to perform prediction of the evolution of the agents and also predicts the terminal probability density of AI agents, such as robotic systems and generative models. We demonstrate the effectiveness of our prediction model through extensive experiments on practical systems driven by AI algorithms.

  
\end{abstract}

\section{Introduction}

The integration of artificial intelligence (AI) models within autonomous agents has transformed many fields, such as autonomous vehicles and personalized recommendation systems. A wide range of complex models such as LLMs, diffusion models, and different neural networks architectures have been used for the above-mentioned applications. They are trained with data that has inherent biases and can cause misaligned performance. Autonomous agents operate in dynamic environments, making decisions based on continuous feedback that allows them to learn and adapt over time. Therefore, studying the behavior and alignment of these AI-driven agents is critical for several reasons. Analyzing their actions can help prevent behaviors that conflict with human values and ethical standards. Further, understanding their behavior is essential for enhancing their efficiency and reliability, which is particularly important in safety-critical applications. These intelligent models are often complex, high-dimensional, and only partially observable over short time intervals. This complexity raises the question of what abstract properties can be efficiently quantified to delineate the boundaries of their intellectual capabilities. The design of models and understanding of the properties of AI components embedded within these agents depends crucially on the ability to analyze the interplay between AI-driven decision-making and the physical behavior of the closed-loop system. This capability is foundational for users to perceive, predict, and interact effectively with intelligent systems. It also provides the theoretical and technical basis for practical tasks such as decision-making and reinforcement learning.

Considering these challenges, it is important to develop methods that are capable of harnessing critical information to identify the behavior of AI components in closed-loop. Such an algorithm needs to provide useful laws and principles governing the behaviors of these AI-driven agents. Among the emerging methodologies, there has been a notable increase in modeling these behaviors as nonlinear dynamical systems. Originating from studies in partial differential equations (PDEs) and fluid mechanics, techniques such as Dynamic Mode Decomposition (DMD) and its generalizations have demonstrated significant capability in revealing the underlying evolutionary laws of AI agents. An alternative to the dynamic-mode-based method is the statistical perspective. This approach, drawing from statistical mechanics, models the behavior of AI agents as stochastic processes. The Perron-Frobenius (PF) operators can offer a useful tool in analyzing the evolution of stochastic processes with underlying nonlinear dynamics. Although the application of probabilistic models to learn and predict the statistical behavior of complex AI agents has increasingly attracted interest in areas such as autonomous driving, motion planning, and human-robot interaction, algorithms based on this probabilistic perspective are not yet to be fully explored.   



\subsection{Literature Review }
 Here we review some empirical and theoretical models from a statistical modeling perspective that have been developed with the aim of improving the prediction of the behavior of agents with AI models in the loop.   
In \cite{goswami2018constrained}, the constrained Ulam Dynamic Mode Decomposition method is presented to approximate the Perron-Frobenius operators for both the deterministic and the stochastic systems. \cite{norton2018numerical} provides a numerical approximation of the PF operators using the finite volume method. These early numerical methods lay very important foundations for the subsequent development of deep-learning-based methods, which turn out to be more scalable approaches. For example,   \cite{YM-DS-etal:2022} provides a direct learning method by training a neural network-based state transition function (operator). The works \cite{huang2019reachnn, everett2021reachability, zhang2023reachability} estimate the reachability sets of neural network-controlled systems. The works \cite{li2021physics} and \cite{de2022generic}
use deep operator networks (DeepONets) and Fourier neural operators  (FNOs) to approximate the solution trajectories of PDEs.
More recently, \cite{surasinghe2024learning} proposes an approximation method based on kernel density estimation (KDE) and    \cite{hashimoto2024deep} present a deep reproducing kernel Hilbert module (deep-RKHM), serving as a deep learning framework for kernel methods.
Beside the above-mentioned empirical methods, there are also some works that try to learn to statistical behavior from an optimal-transport-theory  perspective. For example, in  \cite{YY-LN-etal:2022}, the authors seek to recover the parameters in dynamical systems with a single smoothly varying attractor by approximating the physical invariant measure. 

A formal analysis of agents with AI models in feedback from the perspective of the evolution of the density is currently lacking in the literature. Reachability analysis needs tracking of every possible trajectory which can be computationally expensive. The density evolution approach uses a single quantity that measures the probability of evolution of trajectories and offers a significant computational advantage. Further, the density perspective allows a convenient method to verify the alignment of machine learning models. We seek to address these challenges in this work.




\subsection{Our Contributions}
Our main contributions are as follows: 
\begin{itemize}
    \item AI-driven agents behave in unpredictable ways due to machine learning black boxes. To the best of our knowledge, we are the first ones to look at this through the lens of propagation of probability densities and the PF operator. AI agents are trained with data that has inherent biases. This, coupled with the structure of machine learning models, can potentially alter the alignment of the model. To verify the alignment of the model, we predict the asymptotic behavior of the model by analyzing the terminal stationary density of the AI agents. 
    
    \item We propose PISA, a novel and scalable algorithm that can simultaneously predict the evolution of the densities of AI agents and estimate their terminal density. Our algorithm is motivated by the spectral decomposition theorem~\cite{lasota2013chaos,boyarsky1988spectral} and provides a theoretical backing for its performance. PISA simultaneously approximates the action of the Perron-Frobenius operator from the trajectory data of agents and predicts their asymptotic behavior.
    
    \item In our proposed algorithm PISA, the model complexity is indexed by the number of basis functions. The number of basis functions is a tunable parameter that can be altered according to the user's needs. We provide a theoretical guarantee of the existence of the optimal solution to our operator estimation problem. 

    \item We numerically verify the effectiveness of PISA in a variety of practical cases and compare it with existing literature. We first predict the behavior of unicycle robots driven by a controller based on diffusion models. Then we analyze the behavior of generative models from the lens of density evolution. Lastly, we apply PISA in the case of predicting the movement of pedestrians. We observe that PISA performs significantly better than the existing literature.

\end{itemize}



\section{Problem Formulation}

Consider an agent with state $x$ whose dynamics are defined by 
\begin{align}
\label{eq:exact-system}
\dot{x} = h(x,u),
\end{align}
where $u$ is an external input to the system. With a parameterized machine learning model as feedback $u = \texttt{ML}_\theta(x)$, the system's dynamics including the feedback input is given by given by 
\begin{equation}
    \label{eq:system}
    \dot{x}=h(x,\texttt{ML}_{\theta}(x)) = f(x),
\end{equation}
where $x(t)\in X \subseteq \mathbb{R}^M$ and $f(\cdot):\mathbb R^M\mapsto\mathbb R^M$ is a nonlinear continuous function. We assume that the nonlinear system \eqref{eq:system} is bounded almost everywhere, that is, $x(t)$ is in the $\mathcal{L}^{\infty}$ space. The dynamics of the probability density of the state of the system $\rho(x) \in \mathcal{L}_1$ is given by,
\begin{align}
   \label{eq:K--L} 
   \frac{\partial \rho(x,t)}{\partial t}=\sum_{m=1}^M\frac{\partial (f_m^T(x)\rho(x,t))}{\partial x_m}=A_{P}\rho(x,t).
\end{align}
Here, $A_{P}$ is called the \textit{infinitesimal generator} of the  \textit{Perron-Frobenius operator} $P$ corresponding to the nonlinear system \eqref{eq:system}, and \eqref{eq:K--L} is called the Liouville equation \cite{lasota2013chaos}.
It is sufficient that $\rho$ is any non-negative function $\mathcal{L}_1$ as it can serve as a probability density function after normalization \cite{bevanda2021koopman}.  Practically, the evolution of the system~\eqref{eq:system} can only be measured by sampling the trajectory. Given a sampling period of $\tau>0$, the discrete sampling of the system gives rise to a discretized dynamics of the system defined \eqref{eq:system}, at times $t=0,\tau,2\tau,\cdots$. For brevity, we use the notation $\rho_k(x)=\rho(x,k\tau)$. The discretized evolution of $\rho_k(x)$ is given by \begin{equation}
     \label{PF}
     \rho_{k+1}(x)=P_{\tau}\circ\rho_k(x),
 \end{equation} 
 where $P_{\tau}$ is called the Perron-Frobenius operator parameterized by the sampling period corresponding to the nonlinear dynamical system \eqref{eq:system}. We illustrate in Figure~\ref{fig:five-dim-sys} how the state trajectory $x(t)$ is coupled with the probability density $\rho(x,t)$ for the Van der Pol oscillator,
 \begin{align*}
 \dot{x}_1 = x_2, \quad
 \dot{x}_2 = \mu ( 1- x_1^2) x_2 - x_1. 
 \end{align*}
\begin{figure}[tbh]
\begin{multicols}{4}
\begin{tikzpicture}
  \node (img1)  {\includegraphics[width=0.2300\textwidth]{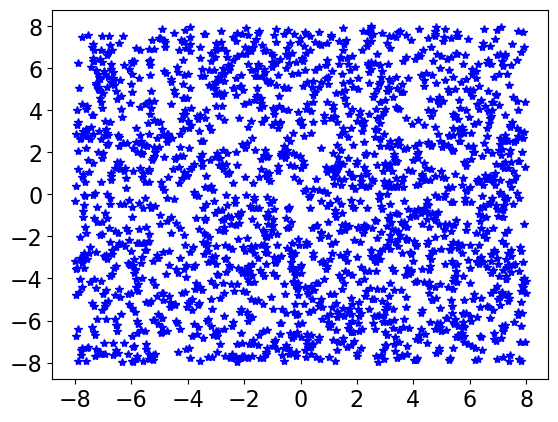}};
  \node[above of= img1, node distance=0cm, yshift=-1.6cm,font=\color{black}]  {\small (a) Initial states $x(0)$};
\end{tikzpicture}\columnbreak
\begin{tikzpicture}
  \node (img1)  {\includegraphics[width=0.2300\textwidth]{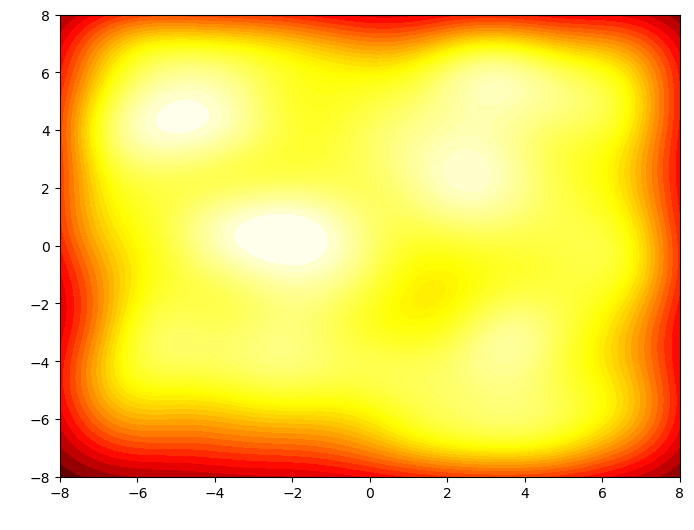}};
  \node[above of= img1, node distance=0cm, yshift=-1.6cm,font=\color{black}]  {\small(b) Initial density $\rho_0(x)$};
\end{tikzpicture}\columnbreak
\begin{tikzpicture}
  \node (img1)  {\includegraphics[width=0.2300\textwidth]{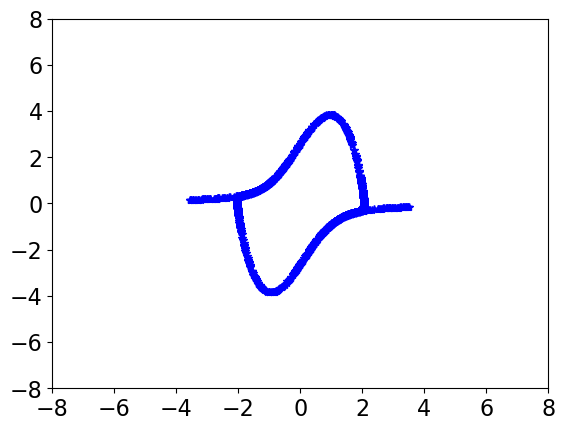}};
  \node[above of= img1, node distance=0cm, yshift=-1.6cm,font=\color{black}]  {\small(c) States $x(1500\tau)$};
\end{tikzpicture}\columnbreak
\begin{tikzpicture}
  \node (img1)  {\includegraphics[width=0.2300\textwidth]{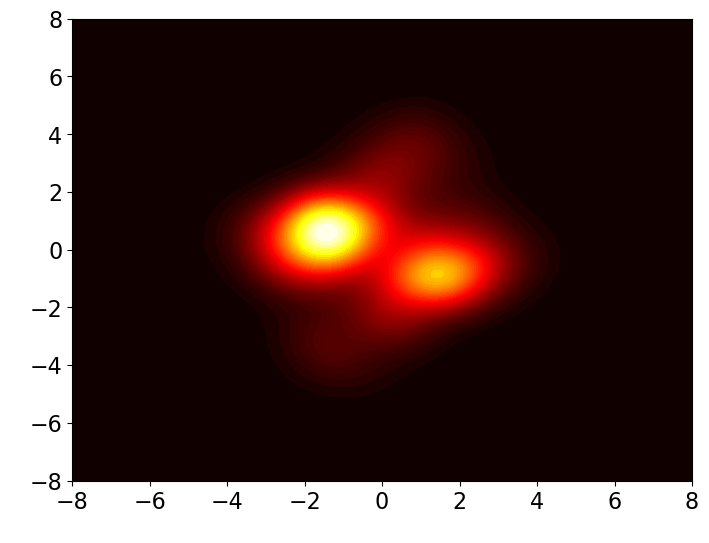}};
  \node[above of= img1, node distance=0cm, yshift=-1.6cm,font=\color{black}]  {\small(d) Density $\rho_{1500}(x)$};
\end{tikzpicture}
\end{multicols}
\vspace*{-2em}
\caption{Illustration of the relationship between the states and probability density of the Van der Pol oscillator in a bounded domain. The x-axis in the figures corresponds to $x_1$ and the y-axis in the figures corresponds to $x_2$. (a) Initial states of several agents driven by Van der Pol dynamics. (b) Initial density of state of agents. (c) States of the agent after time $t = 1500\tau$. (d) Density of the states at time $t = 1500\tau$. Brighter colors in (b) and (d) represent higher probability. The states are sampled with $t = 0.01$.}
\label{fig:five-dim-sys}
\end{figure}

 It is important to note that the Perron-Frobenius operator completely defines the evolution of the density of the system. Hence, our goal is to analyze the behavior of the AI-driven agents ~\eqref{eq:system}, through the estimation of the PF operator. 
 
 Our second goal is to estimate the asymptotic behavior of AI-driven agents. Several systems~\eqref{eq:system} exhibit stationary states asymptotically. For example, robotics systems are designed to stabilize certain points in the domain. Another example is a diffusion model which is trained to sample from unknown target distributions. For systems that exhibit stationary states, there exists an invariant density $\rho^*$~\cite{lasota2013chaos} for the PF operator such that 
\begin{align}
\rho^* = P_\tau \circ \rho^*.
\label{eqn:rho-star}
\end{align}
 Here agents following~\eqref{eq:system} reach $\rho^*$ asymptotically. We seek to estimate the terminal density $\rho^*$ as it provides a convenient method to assess the alignment of the AI-driven agents.

Our data consists of the state trajectory of $N$ identical agents governed by the dynamics \eqref{eq:system}. The trajectory of these agents are collected from $t=0$ to $t=T$ with a fixed sampling period  $\tau=\frac{T}{K}$. The sampled dataset is given by $\{\mathcal{X}_n\}_{n=1}^N$ of the state $x$, where $\mathcal{X}_n=[\chi_0^n,\chi_1^n,\chi_2^n, \cdots, \chi_K^n
]\in \mathbb R^{M\times (K+1)}.$ Note that the collected data set can also come from a single agent starting from $N$ different initial states. Given the data $\{\mathcal{X}_n\}_{n=1}^N$, we seek to estimate the Perron-Frobenius operator and the terminal density $\rho^*$ of the system.


\section{ Prediction Informed by Spectral-decomposition  Algorithm (PISA) for Learning Perron-Frobenius Operators}
We present our algorithm to estimate the PF operator in this section. Further, we predict the asymptotic behavior of the system by estimating the terminal density of the dynamical system. 

\subsection{Density Estimation using Kernel Density Estimation}

We employ Kernel Density Estimation to numerically construct the probability density $\rho_k(x)$ using the data  $\{\mathcal{X}_n\}$. We can view  $\{\mathcal{X}_n\}$ by iterating with respect to time as $\{\mathcal{Y}_k\}_{k=0}^K$, where $\mathcal{Y}_k=[\chi_k^1,\chi_k^2,\cdots,\chi_k^N]$
denotes the state vectors of $N$ particles at time $t=k\tau$. Using kernel density estimation \cite{hastie2009elements}, we then get an empirical probability distribution estimation $\rho_k(x)$. In this paper, we choose to use the Gaussian kernel for the estimation of $\rho_k$ which is given by
\begin{align}    
    \label{eq:density_0}
    \rho_k(x)= \frac{1}{N\sqrt{\det(2\pi\sigma_k^2I_M)}}\sum_{n=1}^N e^{-\frac{\left\|x-\chi_k^n\right\|^2}{2\sigma_k^2}}.
\end{align}

It is important to note that any choice of density estimation algorithm can be used with the data $\{\mathcal{X}_n\}$ to obtain $\{\rho_k(x)\}_{k=0}^K$. KDE provides a convenient choice for measuring the probability $\rho(x)$ at fixed reference points. The choice of reference points and the parameter $\sigma_k$ can be chosen by the user to better approximate $\rho_k$. {In this paper, we choose to uniformly sample the reference points in the domain $X$ to estimate every $\rho_k(x)$.} We fix a constant $\sigma_K = \sigma$ for simplicity. More specific KDE-related tools can be used based on the system in consideration and the domain, as enlisted in \cite{chen2017tutorial}. 
\subsection{Proposed Algorithm}
We approximate the PF operator using the following model. 
\begin{align}
    \label{eq:NN_PF}
    \begin{aligned}
    P_{\tau}\circ\rho(x)
    &=\rho(x)-\sum_{i=1}^l\left(\frac{1}{l}-\texttt{A}^i_{\theta}(\rho)\right)\texttt{G}^i_{\gamma}(x).
    \end{aligned}
\end{align}

Here, we are decomposing the action of the PF operator on the density $\rho(x)$ into $2l$ components as given by $l$ functions $\texttt{A}^i_\theta(\rho)$ and $l$ functions $\texttt{G}^i_\gamma(x)$. The functions $\texttt{A}^i_\theta(\rho)$ and $\texttt{G}^i_\gamma(x)$ are parameterized by $\theta$ and $\gamma$ respectively. This method of decomposing the PF operator is guided by the spectral decomposition theorem which we elaborate on in Section~\ref{sec:SDT}. 


 Given the decomposition of the PF operator, we propose the following loss function, guided by the spectral decomposition theorem, to learn the parameter $\theta$ and $\gamma$.
\begin{align}
\begin{aligned}
    L({\theta}, \gamma)=&\sum_{k=0}^{K-1}D\left(\rho_k(x)-\sum_{i=1}^l\left(\dfrac{1}{l}-\texttt{A}^i_{\theta}(\rho_k)\right)\texttt{G}^i_{\gamma}(x)\Bigg\Vert\rho_{k+1}(x)\right)+\lambda\sum_{i\neq j}^l\braket{\texttt{G}^i_{\gamma}(x),\texttt{G}^j_{\gamma}(x)}\\&+\mu\sum_{r=1}^lD\left(\texttt{G}^r_{\gamma}(x)-\sum_{i=1}^l\left(\dfrac{1}{l}-\texttt{A}^i_{\theta}(\texttt{G}^r_{\gamma})\right)\texttt{G}^i_{\gamma}(x)\Bigg\Vert \texttt{G}^{r+1}_{\gamma}(x)\right).
\end{aligned}
\end{align}
 We then construct PISA as the following alternating optimization algorithm to compute $\theta$ and $\gamma$, in which we choose $\texttt{A}^i_{\theta}(\rho)$ and $\texttt{G}^i_{\gamma}(x)$ to be  outputs of two distinct neural networks parameterized by $\theta$ and $\gamma$, respectively.
\begin{algorithm}
\label{algo:PF-optimization}
\SetAlgoLined
\caption{\bf{:  Prediction Informed by Spectral-decomposition  Algorithm (PISA) }}
\KwData{$l > 0$, $\lambda>0$, $\mu>0$; $\rho_k(x)$, for $k=0,1,\cdots,K$; initial values of  ${\gamma}$ and ${\theta}$; two small positive thresholds $\epsilon_1$ and $\epsilon_2$;}
\KwResult{$\gamma$ and $\theta$;}
$N_\text{epochs} \gets 1000$\\
\While{$N_\text{epochs}\neq 0$}{

Solve the following optimization problem to get $\gamma^*$
\begin{align}\label{opt}
 \begin{split}
 &\min_{\gamma} ~~L(\theta,\gamma)\\
 &\text{s.t.}~ ~ \texttt{G}^i_{\gamma}(x)\geq0 \text{ and } \int \texttt{G}^i_{\gamma}(x)dx =1, \text { for } i=1,\cdots,l;
 \end{split}
 \end{align}
 
\noindent\If{$\left\|\gamma^*-\gamma\right\|\geq \epsilon_1$} 
{$\gamma \gets \gamma^* $ }
Solve the following optimization problem to get $\theta^*$
\begin{align}\label{opt22}
 \begin{split}
 &\min_{\theta} ~~L(\theta,\gamma)\\
  &\text{s.t.} ~~\texttt{A}^i_{\theta}(\rho)\geq0, \text { for } i=1,\cdots,l;
 \end{split}
 \end{align}

\noindent\If{$\left\|\theta^*-\theta\right\|\geq \epsilon_2$} 
{$\theta \gets \theta^* $} 
$N_\text{epochs} = N_\text{epochs} -1 $}
\end{algorithm}

An important aspect of PISA is that it can also predict the terminal density of the PF operator. The estimate of terminal density of $P_{\tau}$ can be expressed as \begin{equation}
    \label{rhostar}
    \rho^*(x)=\frac{1}{l}\sum_{i=1}^l        \texttt{G}^i_{\gamma}(x). \end{equation}


\section{Numerical Experiments}

We present the effectiveness of PISA on different numerical testbeds. We performed the numerical experiments on a machine with Intel i9-9900K CPU with 128GB RAM and the Nvidia Quadro RTX 4000 GPU. In our numerical experiments, we compare the performance of PISA with the that of ~\cite{YM-DS-etal:2022}. Particularly, \cite{YM-DS-etal:2022} approximates the PF operator as 
\begin{align*}
\rho_{k+1} = e^{t \cdot \texttt{NN}_\delta(x,t)} \rho_k.
\end{align*}
Here, note that $\texttt{NN}_\delta$ approximates the Liouville operator $A_P$ given in \eqref{eq:K--L}. Then $e^{t \cdot \texttt{NN}_\delta}$ is an approximately linear solution to~\eqref{PF}. 

\subsection{ Unicycle Model with an NN-Based Controller}
We first consider agents following the unicycle dynamics,
\begin{align}
\dot{x}_1 = u_1 \cos(x_3), \quad \dot{x}_2 = u_1 \sin(x_3), \quad \dot{x}_3 = u_2.
\end{align}
We consider the task of controlling $N$ unicycle agents using a diffusion-model-based controller developed in \cite{KE-DG-FP:2024}. The task is to start from a uniform distribution and finally reach two Gaussian distributions centered at $-4\mathbf{1}$ and  $4\mathbf{1}$, respectively. With the number of agents as $N=1000$, we generated trajectories of length $T = 8$ seconds with a sampling period of $\tau = 0.01$. The initial and final states of the data are represented in Figure~\ref{fig:unicycle}(a). We predicted the terminal density $\rho^*$ as given by~\eqref{rhostar}.  The estimated terminal density corresponding to the targets the agents are trained to reach is depicted in Figure~\ref{fig:unicycle}(b). 
In Figure~\ref{fig:unicycle}(c), we compare PISA against \cite{YM-DS-etal:2022}, using the same amount of training samples and the same size of neural networks for both algorithms. We set $l=5$ for PISA and used fully connected feed-forward neural networks with 3 hidden layers to learn $\texttt{A}_\theta$, $\texttt{G}_\gamma$ and $\texttt{NN}_\delta$.The two algorithms are trained for 1000 iterations and tested on the testing dataset of length $T=3$ seconds. The y-axis of Figure~\ref{fig:unicycle}(c) depicts the KL divergence between the predicted density and the true density in the testing dataset. Initially, it is evident that \cite{YM-DS-etal:2022} performs better than PISA due to the nature of the model. Particularly, \cite{YM-DS-etal:2022} chooses a linear solution to the PDE~\eqref{PF} which explains the better performance initially but results in a rapid decrease in performance. We see that PISA performs better by one order of magnitude than \cite{YM-DS-etal:2022} over a long time horizon. 
\begin{figure}[tbh]
\begin{multicols}{3}
\begin{tikzpicture}
  \node (img1)  {\includegraphics[width=0.2800\textwidth]{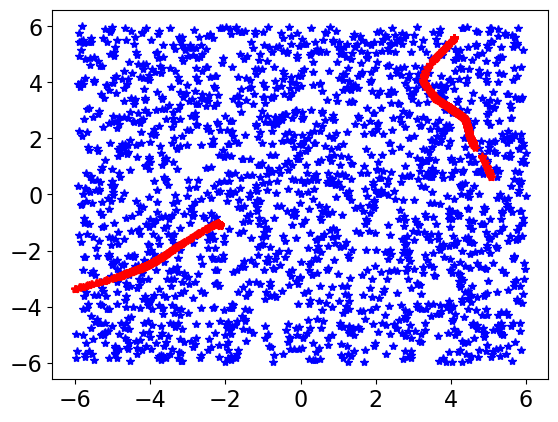}};
  \node[left of= img1, node distance=0cm, xshift=-2.10cm,font=\color{black}]  {\small $x_2$};
  \node[below of= img1, node distance=0cm, yshift=-1.70cm,font=\color{black}]  {\small $x_1$};
  \node[above of= img1, node distance=0cm, yshift=-2.2cm,font=\color{black}]  {\small (a) Dataset };
\end{tikzpicture}\columnbreak
\begin{tikzpicture}
  \node (img1)  {\includegraphics[width=0.2800\textwidth]{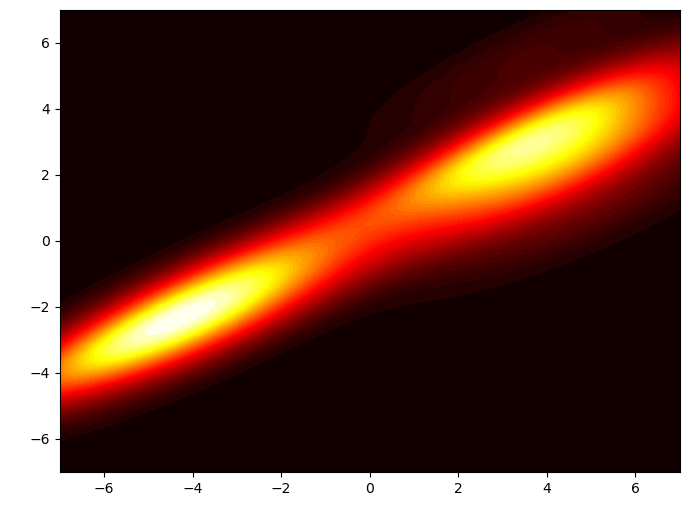}};
  \node[left of= img1, node distance=0cm, xshift=-2.10cm,font=\color{black}]  {\small $x_2$};
  \node[below of= img1, node distance=0cm, yshift=-1.70cm,font=\color{black}]  {\small $x_1$};
  \node[above of= img1, node distance=0cm, yshift=-2.2cm,font=\color{black}]  {\small (b) Predicted $\rho^*$ };
\end{tikzpicture}\columnbreak
\begin{tikzpicture}
  \node(img1)  {\includegraphics[width=0.2500\textwidth]{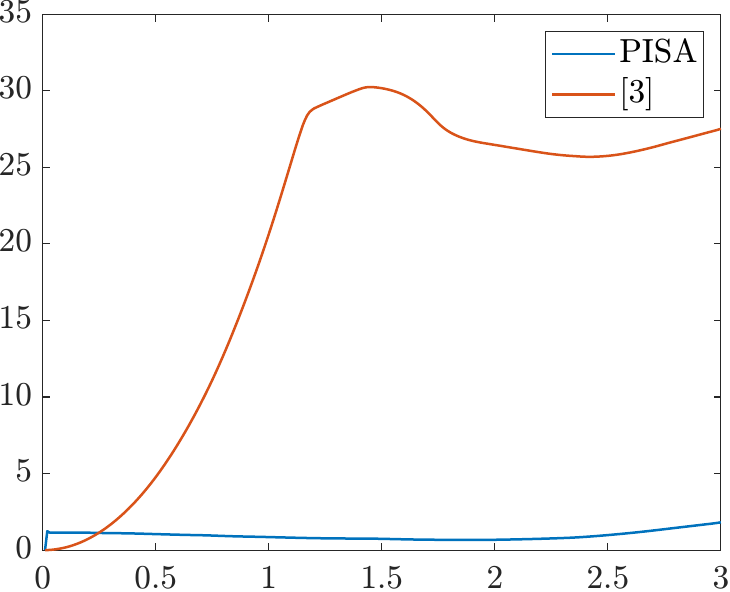}};
  \node[left of= img1, rotate = 90, node distance=0cm, yshift=2.2cm,font=\color{black}]  {\small $D(\hat{\rho}_t\|\rho_t)$};
  \node[below of= img1, node distance=0cm, yshift=-1.70cm,font=\color{black}]  {\small time};
  \node[above of= img1, node distance=0cm, yshift=-2.2cm,font=\color{black}]  {\small(c) Comparison with \cite{YM-DS-etal:2022}};
\end{tikzpicture}
\end{multicols}
\vspace*{-2em}
\caption{Experiments on the unicycle model data generated using controller from~\cite{KE-DG-FP:2024}. (a) Initial states are uniformly distributed, depicted in blue, final states are in red. (b) The estimated $\rho^*$ by PISA. (c) Performance comparison between PISA and \cite{YM-DS-etal:2022} on testing data of length $3$ seconds. Performance metric is the KL divergence between predicted and true densities. Our dataset contains $K= 800$ sampled instants, wherein  we use the first $500$ samples of each trajectory as the training data set and the remaining $300$ as the testing set.}
\label{fig:unicycle}
\end{figure}


\subsection{Predicting Behavior of Score-Based Generative Model}

We consider the problem of analysis of the behavior of generative models. Generative models behave as evolving agents by taking data samples from an initial noise density and reaching an unknown target density. Generative models are complex and their behavior can be analyzed through the trajectory of converting noise samples to data samples. Our task is to analyze the behavior of generative models from the lens of evolving probability densities using their sampling trajectories. We particularly consider the case of diffusion models based on estimating the score~\cite{YS-JSD-etal:2020}. In the diffusion model we use the data lies in five dimensions. In score-based generative models, new data samples from unknown target distributions are generated using a bi-directional scheme. Given data from an unknown target distribution, noise is sequentially added to data samples using a Stochastic Differential Equation in the forward process until a desired noise distribution is reached. To sample new data from the target distribution, this forward process is reversed so that a sample starts from the noise distribution and ends as a sample from the target distribution. Particularly,~\cite{YS-JSD-etal:2020} reverses the forward process by learning the \emph{score} of the data distribution. We seek to study the behavior of these score-based generative models by their action on samples in the reverse process. We show that we could not only predict the behavior of diffusion models but also potentially identify the target distribution by estimating $\rho^*$. 

We consider the following ODE for the score-based generative model,
\begin{align}
\dot{x} = u
\end{align}
Here, $x,u \in \mathbb{R}^5$. In the forward process $u~\sim\mathcal{N}(\bm{0},I)$, whereas in the reverse process, $u$ denotes the output of a neural network $\texttt{S}_\psi(x,t)$ that estimates the score. 
The neural network $\texttt{S}_\psi(x,t)$ with five outputs approximates the score of the forward process. In this experiment, the task is to sample from Gaussians centered at $-4\bm{1}$ and $3\bm{1}$. The noise distribution is the uniform distribution over the domain $[-8,8]^5$. In the forward process, we start with $N=3000$ samples from $\mathcal{N}_1(-4\bm{1},0.2I)$ and  $\mathcal{N}_1(3\bm{1},0.2I)$. The data samples are diffused in the forward process for a time period of 6 seconds. In the reverse process, to sample from the desired distributions, we learn the score as proposed in \cite{YS-JSD-etal:2020}. Once the score is sufficiently learned using a neural network, we record $N=3000$ trajectories in the reverse process for a time period of 6 seconds. The first four seconds of the dataset constitute our training dataset and the last 2 seconds constitute the testing dataset. 

It is evident from Figure~\ref{fig:5d-score}(a) that PISA makes an accurate estimation of the target distribution. We can see that the estimated $\rho^*$ corresponds to that of the aforementioned target distribution. This implies that the diffusion model is well aligned with its intended task. Further, in Figure~\ref{fig:5d-score}(b), we compare PISA with~\cite{YM-DS-etal:2022} in predicting the evolution of $\rho_k$. We choose $l=10$ for PISA and feedforward neural networks with 3 hidden layers for $\texttt{A}^i_\theta$, $\texttt{G}^i_\gamma$ and $\texttt{NN}^i_\delta$. We use the KL divergence between the predicted $\hat{\rho}_k$ and the true $\rho_k$ from the testing dataset. Once again, we see that~\cite{YM-DS-etal:2022} performs better initially due to its linear solution to the PDE but its performance deteriorates rapidly. PISA retains relatively much better performance over a longer time horizon.

\begin{figure}[tbh]
\begin{multicols}{2}
\hspace*{0.8cm}
\begin{tikzpicture}
  \node (img1)  {\includegraphics[width=0.360\textwidth]{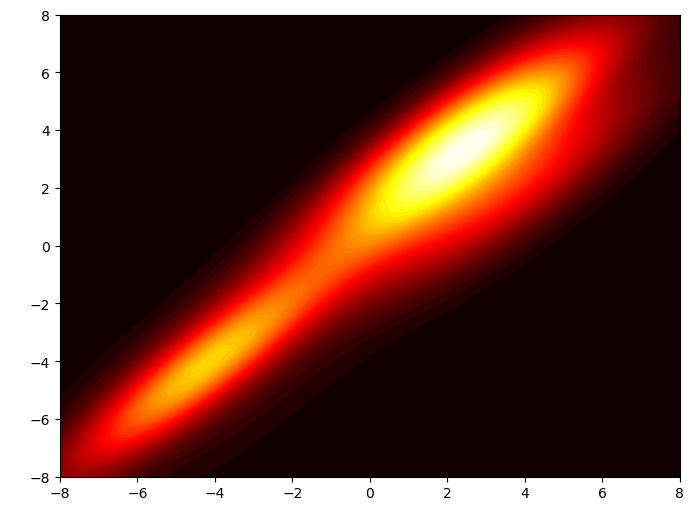}};
  \node[left of= img1, node distance=0cm, xshift=-2.60cm,font=\color{black}]  {$x_2$};
  \node[below of= img1, node distance=0cm, xshift = 0.0cm,yshift=-2.20cm,font=\color{black}]  {$x_1$};
  \node[above of= img1, node distance=0cm, xshift = 0.5cm, yshift=-2.9cm,font=\color{black}]  { (a) Predicted $\rho^*$};
\end{tikzpicture}\columnbreak
\hspace*{0.4cm}
\begin{tikzpicture}
  \node (img1)  {\includegraphics[width=0.3200\textwidth]{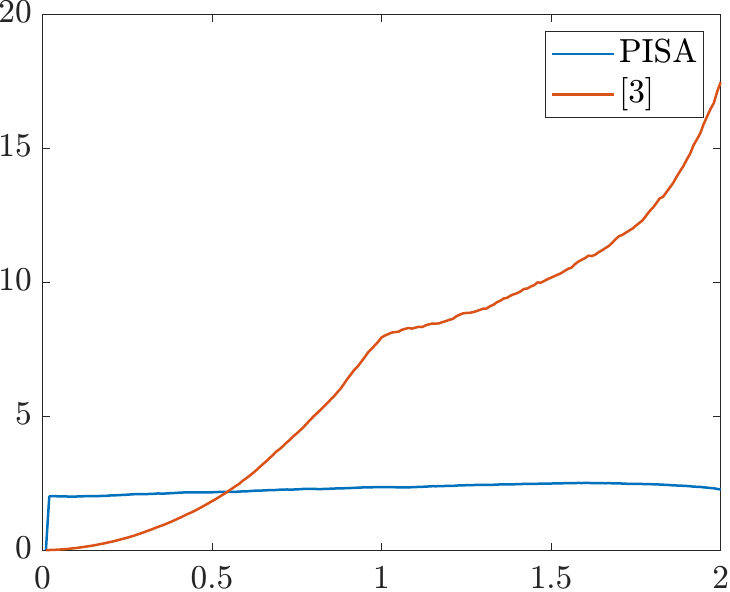}};
  \node[left of= img1, rotate = 90, node distance=0cm, yshift=2.60cm,font=\color{black}]  {$D(\hat{\rho}_t\|\rho_t)$};
  \node[below of= img1, node distance=0cm, yshift=-2.20cm,font=\color{black}]  {time};
  \node[above of= img1, node distance=0cm, yshift=-2.9cm,font=\color{black}]  { (b) Comparison with ~\cite{YM-DS-etal:2022} };
\end{tikzpicture}
\end{multicols}
\vspace*{-2em}
\caption{Experiments on the five dimensional score-based generative model~\cite{YS-JSD-etal:2020}. (a) Predicted terminal density $\rho^*$ projected on the first two dimension. Brighter colours indicate a higher probability. (b) Comparison of performance of PISA and \cite{YM-DS-etal:2022} on testing dataset. }
\label{fig:5d-score}
\end{figure}

\subsection{UCY Pedestrian Dataset}

We show the effectiveness of PISA on a physical data by applying it to the UCY pedestrian dataset~\cite{lerner2007crowds}. Here, the task is to predict the movement of pedestrians by estimating the evolution of the density of pedestrians. The dataset consists of videos of pedestrians walking in several regions as depicted in Figure~\ref{fig:pedestrian}(a). 
 We use the Zara01 subsection of the dataset in our experiments. We obtained pre-processed data from the code repository of~\cite{salzmann2020trajectron++}, where the video was processed to obtain the $x$ and $y$ coordinates of the position of the pedestrians. Here we consider each pederstrian as an agent in our analysis. We assume that every pedestrian is identical and their movement is governed by the dynamics given in~\eqref{eq:system}. Given the positions of pedestrians as depicted in Figure~\ref{fig:pedestrian}(a), we approximate the probability density of the pedestrians as depicted in Figure~\ref{fig:pedestrian}(b).  
 
 In Figure~\ref{fig:pedestrian}(c), we once again compare PISA with the exponential model~\cite{YM-DS-etal:2022} on the test data for the first 200 time samples. We choose $l=5$ and feedforward neural networks with 3 hidden layers for $\texttt{A}^i_\theta$, $\texttt{G}^i_\gamma$ and $\texttt{NN}^i_\delta$. Here, we see that the initial time period in which the exponential model works better than PISA is significantly shorter due to the model inaccuracy. However, PISA continues to perform well over a longer time horizon. It is also important to note that both models have significantly higher estimation errors in the testing performance for this experiment compared to performance in experiments on the unicycle model and the score-based generative model. This is due to the stochasticity of the data and the assumption we make about the nature of the pedestrians. Further, with new pedestrians entering the scene and existing pedestrians leaving the scene, there are jumps in the probability density which further reduces the performance of KDE-based methods. Better density approximation algorithms that are suited for stochastic data and for incorporating jumps in the probability density can be employed to obtain an improvement in the performance.
\begin{figure}[tbh]
\begin{multicols}{3}
\begin{tikzpicture}
  \node (img1)  {\includegraphics[width=0.2700\textwidth]{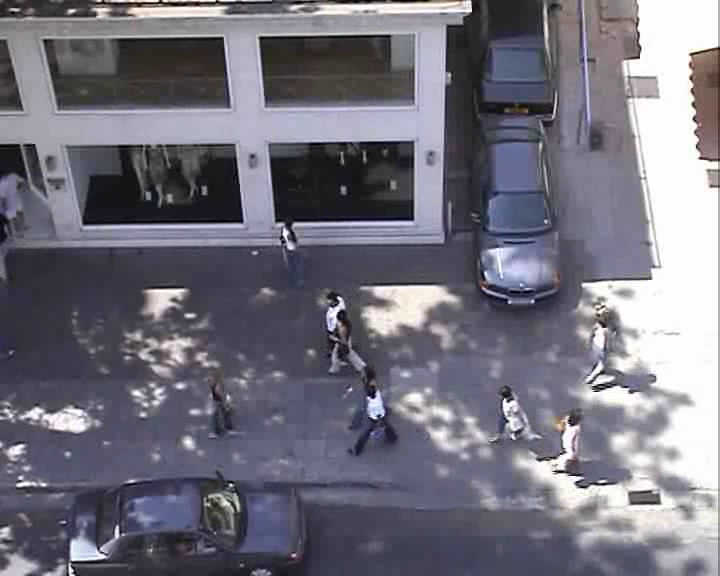}};
  \node[above of= img1, node distance=0cm, yshift=-2.2cm,font=\color{black}]  {\small (a) Snapshot from dataset };
\end{tikzpicture}\columnbreak
\begin{tikzpicture}
  \node (img1)  {\includegraphics[width=0.3600\textwidth]{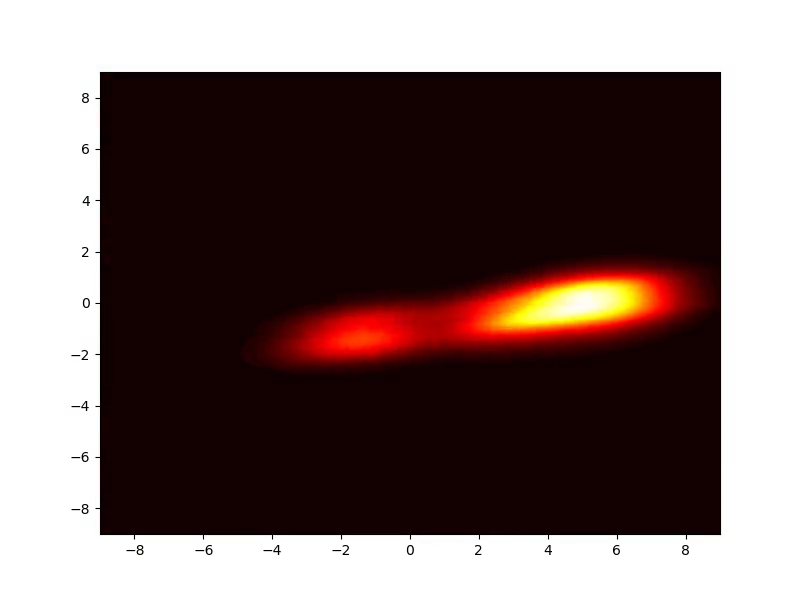}};
  \node[left of= img1, node distance=0cm, xshift=-2.30cm,font=\color{black}]  {\small $x_2$};
  \node[below of= img1, node distance=0cm, yshift=-1.90cm,font=\color{black}]  {\small $x_1$};
  \node[above of= img1, node distance=0cm, yshift=-2.2cm,font=\color{black}]  {\small (b) Estimated density $\rho_k$ };
\end{tikzpicture}\columnbreak
\hspace*{-0.4cm}
\begin{tikzpicture}
  \node(img1)  {\includegraphics[width=0.2700\textwidth]{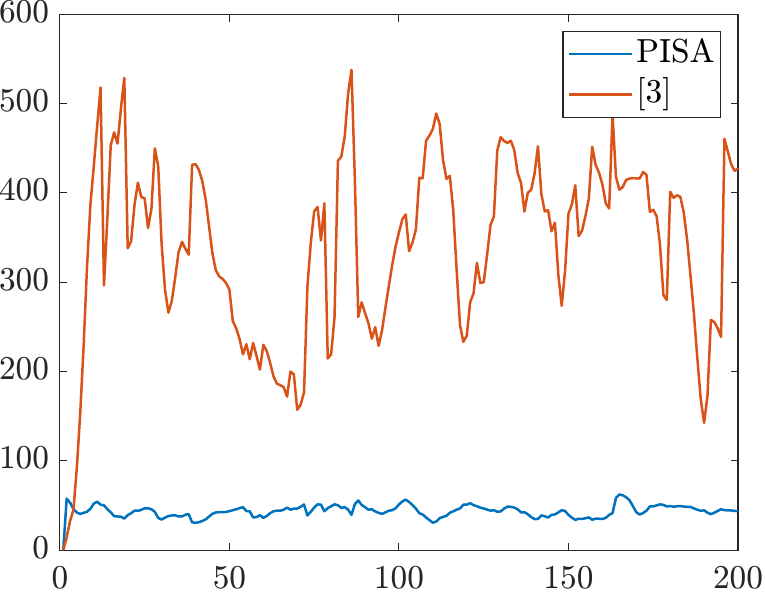}};
  \node[left of= img1, rotate = 90, node distance=0cm, yshift=2.2cm,font=\color{black}]  {\small $D(\hat{\rho}_t\|\rho_t)$};
  \node[below of= img1, node distance=0cm, yshift=-1.70cm,font=\color{black}]  {\small time};
  \node[above of= img1, node distance=0cm, yshift=-2.2cm,font=\color{black}]  {\small(c) Comparison with \cite{YM-DS-etal:2022}};
\end{tikzpicture}
\end{multicols}
\vspace*{-2em}
\caption{Experiments on the UCY pedestrian dataset. (a) A snapshot from the dataset. (b) Corresponding estimated probability density. (c) Comparison between PISA and \cite{YM-DS-etal:2022} in the estimation of future probability densities.  }
\label{fig:pedestrian}
\end{figure}


\section{Theoretical Foundations of Spectral Decomposition}
\label{sec:SDT}
The Perron-Frobenius operator  $P_{\tau}$ is a constrictive Markov operator~\cite{lasota2013chaos} that pushes forward the probability distribution $\rho_k(x)$ to a stationary distribution $\rho^*(x)$ corresponding to the attractors of dynamical systems. This evolution of the probability distribution is in fact a \textit{Markov Process}. For Markov operators with  the constrictive property, we have the following spectral decomposition theorem.

\begin{lemma}
\cite{lasota2013chaos, boyarsky1988spectral} \label{thm:spec_decom}
     Let $P$ be a constrictive Markov operator. Then there exists an integer $l$, two sequences of non-negative functions $g_i(x)\in\mathcal{L}_1$ and $h_i(x)\in\mathcal{L}_{\infty}$, $i=1,2,\cdots,l,$ and an  operator $Q:\mathcal{L}_1\mapsto\mathcal{L}_1$ such that for all $\rho(x)\in\mathcal{L}_1$, $P\circ \rho(x)$ can be written in the form \begin{equation}
        \label{eq:spectral}
    P\circ\rho(x)=\sum_{i=1}^la_i(\rho)g_i(x)+Q\circ \rho(x),\end{equation}
    where $$a_i(\rho)=\int \rho(x)h_i(x)dx.$$
    The functions $g_i(x) $ and the operator $Q$ have the following properites:
    
    1) Each $g_i(x)$ is normalized to one and \begin{equation}
         \label{eq:ortho}g_i(x)g_j(x)=0, \quad\text{ for all } i\neq j,   
        \end{equation} i.e., the density functions $g_i(x)$ have disjoint supports;
        
        2)  For each integer $i$ there exists a unique integer $\alpha(i)$ such that \begin{equation}
    \label{eq:permute} 
        P\circ g_i(x)=g_{\alpha(i)}(x).\end{equation} where $\alpha(i)\neq\alpha(j)$ for $i\neq j$. Thus, $P$ just permutes the functions $g_i(x)$; 
        
        3) Moreover, \begin{equation}
\label{eq:decay}\|P^nQ\circ\rho(x)\|\rightarrow0
        \end{equation} as $n\rightarrow\infty$ for every $\rho(x)\in \mathcal{L}_1$.
\end{lemma}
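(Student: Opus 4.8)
The plan is to extract the finite permutation structure directly from the orbit $\{P^n\rho\}$ by combining the quantitative constrictive bound with weak compactness in $\mathcal{L}_1$, and then to read off properties \eqref{eq:ortho}, \eqref{eq:permute}, and \eqref{eq:decay} from the geometry of the limiting densities.

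\textbf{Step 1 (the weak-compactness engine).} First I would record that the constrictive hypothesis — there exist $\delta>0$ and $\kappa<1$ such that $\int_E P^n\rho\,dx\le\kappa$ for every set $E$ with $m(E)\le\delta$ and all large $n$ — is exactly a uniform integrability statement for the orbit. Since $P$ is a Markov operator it preserves the $\mathcal{L}_1$ norm on densities, so the orbit is also norm bounded; by the Dunford--Pettis theorem it is therefore relatively weakly sequentially compact, and every weak subsequential limit again satisfies the constrictive bound. This lets me attach to each density $\rho$ a nonempty, weakly compact $\omega$-limit set $\omega(\rho)$ that is invariant under $P$, with $P$ acting as a norm-preserving positive bijection on the weak closure $\mathcal{A}$ of $\bigcup_\rho\omega(\rho)$.

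\textbf{Step 2 (building the $g_i$).} Next I would construct the limit densities by an indecomposability argument on $\mathcal{A}$. Because $P$ preserves the lattice operations and the norm, among the limiting densities I would select those whose supports are minimal under inclusion; the key dichotomy to establish is that two minimal-support limiting densities either coincide or have essentially disjoint supports, which is precisely \eqref{eq:ortho}. Calling the distinct ones $g_1,\dots,g_l$ (each normalized to one), I would then check that $P$ maps each $g_i$ to another minimal-support limiting density, since a positive isometry cannot destroy indecomposability or disjointness of supports.

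\textbf{Step 3 (finiteness and the permutation $\alpha$).} The integer $l$ is finite because of the quantitative constants together with the boundedness of the domain assumed in \eqref{eq:system}. Since each $g_i$ inherits $\int_E g_i\,dx\le\kappa$ whenever $m(E)\le\delta$, it cannot concentrate: partitioning $\mathrm{supp}(g_i)$ into pieces of measure at most $\delta$ forces $m(\mathrm{supp}(g_i))\ge c(\delta,\kappa)>0$. As the supports are disjoint and $X$ has finite measure, at most $m(X)/c(\delta,\kappa)$ of them can coexist, so $l<\infty$. Injectivity of $P$ on the $g_i$ (from norm preservation) together with finiteness then forces $P\circ g_i=g_{\alpha(i)}$ for a permutation $\alpha$ with $\alpha(i)\neq\alpha(j)$ for $i\neq j$, which is \eqref{eq:permute}.

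\textbf{Step 4 (the remainder $Q$ and the main obstacle).} Finally I would introduce dual functionals $h_i\in\mathcal{L}_\infty$ adapted to the disjoint supports, set $a_i(\rho)=\int\rho(x)h_i(x)\,dx$, and define $Q\circ\rho:=P\circ\rho-\sum_{i=1}^l a_i(\rho)g_i$. To obtain \eqref{eq:decay} I would argue that every $\omega$-limit density lies in the span of the $g_i$, so the complementary component carries no asymptotically recurrent mass; since $P$ merely permutes the $g_i$, the projection commutes with the dynamics in the appropriate averaged sense and the residual flow $\|P^nQ\circ\rho\|$ must vanish as $n\to\infty$. I expect the genuine difficulty to lie in Steps 2 and 3: producing \emph{exactly} (not approximately) disjoint supports and pinning down \emph{finiteness} requires converting the soft weak-compactness conclusion into a rigid finite combinatorial structure, and this is exactly the point where the quantitative constants $\delta,\kappa$ and the finite measure of $X$ must be used rather than mere relative compactness.
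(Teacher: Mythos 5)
This lemma is not proved in the paper at all: it is imported verbatim from the literature (Lasota--Mackey's spectral decomposition theorem for constrictive Markov operators, and Boyarsky's paper), so there is no in-paper argument to compare yours against. Judged on its own terms, your outline does track the skeleton of the textbook proof --- constrictiveness gives uniform integrability, hence weak precompactness of the orbit by Dunford--Pettis, and the decomposition is read off the structure of the attracting set --- but the load-bearing steps are asserted rather than proved, and they are precisely the hard ones. Concretely: (a) the dichotomy in Step 2, that two minimal-support limit densities either coincide or have a.e.-disjoint supports, is the crux of the theorem and you give no mechanism for it (positivity and norm preservation of $P$ alone do not yield it; Lasota--Mackey obtain it through an asymptotic-periodicity / lower-bound-function argument spanning several lemmas); (b) the claim in Step 4 that every $\omega$-limit density lies in $\mathrm{span}\{g_i\}$ \emph{is} property \eqref{eq:decay} in disguise, so defining $Q$ as the residual and then saying the residual "must vanish" is circular; (c) the existence of dual functionals $h_i\in\mathcal{L}_\infty$ with $a_i(\rho)=\int\rho h_i\,dx$ reproducing the correct coefficients for \emph{all} $\rho\in\mathcal{L}_1$ (not just for $\rho$ supported where the $g_i$ live) requires a separate construction; and (d) asserting in Step 1 that $P$ is a bijection on the limit set presupposes the injectivity that the permutation structure is supposed to deliver.

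Two smaller points. Your finiteness argument leans on $m(X)<\infty$, which happens to hold in this paper's bounded-domain setting but is not needed: disjointly supported normalized densities are pairwise at $\mathcal{L}_1$-distance $2$, and a compact subset of $\mathcal{L}_1$ can contain only finitely many $2$-separated points, which bounds $l$ with no reference to the measure of $X$. Also, the quantitative bound you take as the starting hypothesis ($\int_E P^n\rho\,dx\le\kappa<1$ for $m(E)\le\delta$) is a \emph{sufficient condition} for constrictiveness, not its definition; the theorem as stated assumes only the existence of a weakly precompact attracting set, so a complete proof must either work from that definition or first reduce to your quantitative form.
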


Lemma~\ref{thm:spec_decom} states that the action of the PF operator can be decomposed into $l$ components through the functionals $a_i(\rho)$ and the functions $g_i(x)$. Here $l$ is a finite integer that serves as a measure of the model complexity of PISA. Further, the operator $Q$ captures the effect of the terminal density on $\rho(x)$. As $t\rightarrow\infty$, the action of $Q$ on $\rho(x)$ decays to 0. This drives our motivation to use $Q \circ \rho(x)$ as
\begin{align}\label{eqn:Qrho}
Q\circ \rho(x) = \rho(x) - \rho^*(x).
\end{align}
Further, as $t\rightarrow\infty$, we can see that 
\begin{align}\label{eqn:rhostar-g}
\rho^*(x) = \frac{1}{l}\sum_i^l g_i(x).
\end{align}
This implies that the density functions $g_i$ serve as a basis for the stationary terminal density $\rho^*$. It is easy to verify for~\eqref{eqn:rhostar-g} that $P\circ\rho^*(x) = \rho^*(x)$ through the permutation property. Given the Lemma~\ref{thm:spec_decom}, \eqref{eqn:Qrho}, and \eqref{eqn:rhostar-g}, we provide a sufficient condition on the output of our algorithm PISA.

\begin{theorem} For systems~\eqref{eq:system} that have a stationary terminal density, there exists a finite $l$, an operator $Q$, $l$ non-negative functionals $\texttt{A}^i_{\theta}(\rho)$ and $l$  densities $\texttt{G}_{\gamma}(x)$ such that the loss $L(\theta,\gamma) = 0$.
\end{theorem}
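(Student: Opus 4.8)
The plan is to exhibit an explicit feasible point $(\theta,\gamma)$ built from the objects guaranteed by the spectral decomposition theorem (Lemma~\ref{thm:spec_decom}) applied to $P_\tau$, and then to show that the three summands of $L(\theta,\gamma)$ vanish separately. Applying Lemma~\ref{thm:spec_decom} to the constrictive Markov operator $P_\tau$ yields a finite $l$, non-negative densities $g_i\in\mathcal{L}_1$ with disjoint supports, non-negative weights $h_i\in\mathcal{L}_\infty$, and the remainder operator $Q$. I would set $\texttt{G}^i_\gamma(x)=g_i(x)$ and $\texttt{A}^i_\theta(\rho)=a_i(\rho)=\int\rho(x)h_i(x)\,dx$. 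With these choices and \eqref{rhostar}, the PISA model \eqref{eq:NN_PF} reduces to $\rho-\rho^*+\sum_i a_i(\rho)g_i$, which by \eqref{eqn:Qrho} and \eqref{eq:spectral} is precisely $Q\circ\rho+\sum_i a_i(\rho)g_i=P_\tau\circ\rho$. Thus the constructed model reproduces the true operator, and feasibility is immediate: each $g_i$ is non-negative and integrates to one by Lemma~\ref{thm:spec_decom}, and $a_i(\rho_k)=\int\rho_k h_i\ge0$ because $h_i\ge0$ and $\rho_k\ge0$, so the constraints of \eqref{opt} and \eqref{opt22} hold.

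It then remains to check the three terms. The orthogonality penalty vanishes because the disjoint-support property \eqref{eq:ortho} gives $\braket{g_i,g_j}=\int g_i g_j\,dx=0$ for $i\neq j$. The prediction term vanishes because, by the computation above, the model output on each data density is $P_\tau\circ\rho_k=\rho_{k+1}$ by the definition \eqref{PF}, so every summand is $D(\rho_{k+1}\Vert\rho_{k+1})=0$. The permutation term vanishes because the model output on $g_r$ equals $P_\tau\circ g_r$, which by the permutation property \eqref{eq:permute} is $g_{\alpha(r)}$; after relabeling the basis so that $\alpha$ is the cyclic shift $r\mapsto r+1$ with the convention $g_{l+1}=g_1$, this equals $\texttt{G}^{r+1}_\gamma$, giving $D(g_{r+1}\Vert g_{r+1})=0$. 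Summing the three zero contributions yields $L(\theta,\gamma)=0$.

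The main obstacle is the permutation term, and it is where I would spend the most care. Lemma~\ref{thm:spec_decom} only asserts that $P_\tau$ permutes the $g_i$ by \emph{some} permutation $\alpha$, whereas the $\mu$-term compares $P_\tau\circ g_r$ against the specific cyclic neighbor $g_{r+1}$; realizing $\alpha$ as a single $l$-cycle requires relabeling the basis, and if $\alpha$ splits into several disjoint cycles I would restrict $l$ to one cycle while keeping $\rho^*=\frac{1}{l}\sum_i g_i$ invariant via the argument below \eqref{eqn:rhostar-g}. A more delicate point is that matching the model output $g_r-\rho^*+\sum_i a_i(g_r)g_i$ to $g_{r+1}$ forces the coefficient structure to be consistent simultaneously with the substitution $Q\circ\rho=\rho-\rho^*$ of \eqref{eqn:Qrho} and with the non-negativity of $\texttt{A}^i_\theta$; since \eqref{eqn:Qrho} is the modeling choice motivated by the asymptotic decay \eqref{eq:decay} rather than an exact identity for finite $k$, I would invoke it in the regime where the data and the basis have reached the attractor, so that $\rho_k-\rho^*$ and $g_r-\rho^*$ coincide with $Q\circ\rho_k$ and $Q\circ g_r$ on the nose. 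Finally, because the theorem asserts only existence of functionals and densities, no network-approximation step is needed beyond this identification, though a universal-approximation argument could be appended if exact representability by $\texttt{A}_\theta$ and $\texttt{G}_\gamma$ is demanded.
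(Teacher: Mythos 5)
Your construction is exactly the one the paper's (much briefer) proof sketch intends: instantiate $\texttt{G}^i_\gamma=g_i$ and $\texttt{A}^i_\theta=a_i$ from Lemma~\ref{thm:spec_decom}, use \eqref{eqn:Qrho} and \eqref{eqn:rhostar-g} to turn the model \eqref{eq:NN_PF} into \eqref{eq:spectral}, and check the three terms of $L$ separately. The orthogonality term is handled correctly, and you have rightly isolated the permutation term as the weak point. But the patch you propose for it does not close the gap. Write the model output on $g_r$ explicitly: with your choices it is $g_r-\rho^*+\sum_i a_i(g_r)g_i$. Since the $g_i$ have disjoint supports they are linearly independent, so equating this expression to $g_{r+1}$ forces
\begin{align*}
a_i(g_r)=\delta_{i,r+1}-\delta_{i,r}+\tfrac{1}{l},
\end{align*}
and in particular $a_r(g_r)=\tfrac{1}{l}-1<0$ whenever $l\ge 2$. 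This contradicts both the non-negativity of the functionals asserted in the theorem (and enforced in \eqref{opt22}) and the fact that $a_i(\rho)=\int \rho\, h_i\,dx\ge 0$ for the true spectral objects. Equivalently, your fix amounts to assuming $Q\circ g_r=g_r-\rho^*$ exactly; but then property \eqref{eq:decay} would give $\|P^n(g_r-\rho^*)\|=\|g_{\alpha^n(r)}-\rho^*\|\rightarrow 0$, which is false for $l\ge 2$ because the $g_i$ are permuted by $P_\tau$, not contracted toward $\rho^*$ (each $\|g_i-\rho^*\|_1=2(1-\tfrac1l)>0$). With the standard values $a_i(g_r)=\delta_{i,\alpha(r)}$ the argument of the KL term becomes $(1-\tfrac1l)(g_r+g_{r+1})-\tfrac1l\sum_{i\neq r,r+1}g_i$, which is negative on part of its domain, so the divergence is not even well defined. ``Waiting until the attractor is reached'' therefore cannot rescue this term.

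The same tension, in milder form, affects the prediction term: $D(\cdot\Vert\rho_{k+1})=0$ requires $Q\circ\rho_k=\rho_k-\rho^*$ as an exact identity at every finite $k$, whereas Lemma~\ref{thm:spec_decom} supplies only the asymptotic decay \eqref{eq:decay}, and the paper's own Remark concedes that \eqref{eqn:Qrho} is one modeling choice among many. So the argument proves $L(\theta,\gamma)=0$ only under the extra structural assumption that $Q\circ\rho=\rho-\rho^*$ holds exactly on the data, and the permutation term forces $l=1$ besides. To be fair, the paper's proof is a two-sentence overview that never performs the computation above and silently carries the same defect; your write-up is the more honest of the two because it surfaces the problem, but the resolution you offer does not work, and the cyclic-relabeling point you raise (restricting to one cycle of $\alpha$) also conflicts with keeping $\rho^*=\tfrac1l\sum_i g_i$ over all of the $g_i$.
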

\begin{proof}
We provide a brief overview of the proof. Lemma~\ref{thm:spec_decom} guarantees the existence of $l$ functions $a_i(\rho)$ and $g_i(x)$ that exactly decompose the action of the PF operator. These are approximated using neural networks $\texttt{A}^i_{\theta}(\rho)$ and $\texttt{G}^i_{\gamma}(x)$, respectively. The cost function $L$ is designed to satisfy the properties of $a_i$ and $g_i$. The first term in the cost function $L$ addresses the propagation of the PF operator. The second term addresses the orthogonality property of every $g_i$ and the last term captures the permutative property $g_i$. \hfill $\blacksquare$
\end{proof}



\begin{remark}
    It should be noticed that the spectral decomposition theorem guarantees the existence of a decomposition in the form of  \eqref{eq:spectral}, it yet does not specify the form of $Q$ and $g_i(x)$, as well as the number $l$. Also, there is no guarantee that the decomposition is unique. Therefore, our choice that $Q\circ\rho(x)=\rho(x)-\rho^*(x)$ is just one of many possibilities. An alternative for the action of $Q$ is $Q\circ\rho(x) = D(\rho\|\rho^*)$, which is also a decreasing quantity. This can be verified by the laws of thermodynamics which guarantee decreasing of  relative entropy. Empirically, we have seen that $Q\circ\rho(x)=\rho(x)-\rho^*(x)$ serves as a better choice for $Q$. Moreover, Lemma~\ref{thm:spec_decom} guarantees a finite model complexity through the number of basis functions $l$. This is a tunable parameter in our algorithm that can adjusted to achieve a desired accuracy of prediction. As described in our numerical experiments, it turns out that PISA makes good approximations of both \eqref{eq:NN_PF} and \eqref{rhostar}. Our relatively small neural networks to approximate $a_i(\rho)$ and $g_i(x)$ through $\texttt{A}^i_\theta$ and $\texttt{G}^i_\gamma$, respectively, are empirically efficient choices.  
\end{remark}

\section{Conclusion}

In this study, we explore the task of predicting the behavior of agents controlled by AI models, using a probabilistic framework to analyze the evolution of probability densities. Our proposed algorithm, PISA, effectively estimates the Perron-Frobenius (PF) operator that characterizes the evolution of these densities, thus enabling predictions of both the short and long-term behavior of AI-driven agents. This ability to forecast asymptotic behaviors is critical for assessing whether such agents align with human values and requirements. Currently, our approach utilizes kernel density estimation to approximate the probability densities from individual trajectories; however, this method introduces potential inaccuracies. Optimizing the choice of kernel functions and adopting more sophisticated density estimation techniques could significantly enhance our model's performance. Additionally, a comprehensive evaluation of the model's complexity and computational efficiency remains to be conducted, which will be crucial for practical applications and further scalability.

\newpage

\bibliographystyle{unsrt}
\bibliography{refs}

\newpage

\section{Supplemental Material} 
\subsection{Preliminaries and Mathematical Notations}\label{sec:pre}
KL divergence between two probability distributions $f(x)$ and $g(x)$ is defined as $$D(f\|g)=\int f(x)\log\frac{f(x)}{g(x)}dx,$$
which serves as a measure of the distance between the two distributions. Notice also that the $D(f\|g)=0$ if and only if $f(x)-g(x)=0$ almost everywhere. Moreover, we denote  $\mathbb R^M$ the vector space of all $M$ dimensional real vectors. The inner product of two vectors (functions) $f(x)$ and $g(x)$ in Hilbert space is defined as $$\braket{f(x),g(x)}=\int f'(x)g(x)dx,$$ where  $f'(x)$ denotes the conjugate transpose of $f(x)$. Also, we use $\|\cdot\|$ to denote the norm of a function (vector) in Banach space. The $\mathcal{L}_2$ norm of a vector $f(x)$ in Hilbert space is defined as $$\|f\|_2=\sqrt{\int f'(x)f(x)dx}.$$ 
The $\mathcal{L}_1$ norm of a vector $f(x)$ in Banach space is defined as $$\|f\|_1={\int |f(x)|dx}.$$
The $\mathcal{L}_{\infty}$ norm of a vector $f(x)$ in Banach space is defined as $$\|f\|_{\infty}=\sup_{x}{ |f(x)|dx}.$$
We also denote $\mathcal{L}_1$ the set of all absolutely integrable functions and $\mathcal{L}_{\infty}$ the set of all  functions which are almost bounded everywhere.
\subsection{Proposition of Perron-Frobenius Operators }\label{Prop_Markov}

\begin{proposition}\label{prop:PF}
    \cite{lasota2013chaos} Suppose we have a nonlinear bounded dynamical system \eqref{eq:system} and the corresponding Perron-Frobenius  operator $P_{\tau}$. Then it follows that \begin{itemize} 
        \item $P_{\tau}$ is a linear operator;
        \item $P_{\tau}\circ\rho(x)$ is non-negative if $\rho(x)$ is non-negative;
         \item Integral invariance: $$\int P_{\tau}\circ\rho(x)dx=\int\rho(x)dx;$$
         \item $P_{\tau}$ has a fixed point (probability distribution function) such that $$P_{\tau}\circ\rho^*(x)=\rho^*(x)$$
         and $$\lim_{n\rightarrow\infty}P_{\tau}^n\circ\rho(x)=\rho^*(x)$$
         which is also denoted  in some literature that $P_{\tau}$ has a preserved measure $\mu_{\rho^*}$.
    \end{itemize}
\end{proposition}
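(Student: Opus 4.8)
The plan is to work from the explicit representation of the Perron--Frobenius operator in terms of the flow map of~\eqref{eq:system}, read off the first three properties as immediate consequences, and then treat the fixed-point and convergence claim separately. Let $S_\tau:X\to X$ denote the time-$\tau$ flow of~\eqref{eq:system}, so that $S_\tau(x(0))=x(\tau)$; since $f$ is continuous and every trajectory remains in the bounded set $X$, $S_\tau$ is a well-defined Borel-measurable bijection with measurable inverse $S_\tau^{-1}$. The operator $P_\tau$ is characterized by the transfer relation
\begin{align}
\int_B P_\tau\circ\rho(x)\,dx = \int_{S_\tau^{-1}(B)}\rho(x)\,dx
\label{eqn:transfer}
\end{align}
for every Borel set $B\subseteq X$, which is the formal integrated counterpart of the Liouville equation~\eqref{eq:K--L} and serves as the single identity from which the elementary properties follow.

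First I would establish linearity: the right-hand side of~\eqref{eqn:transfer} is linear in $\rho$ because Lebesgue integration is linear, so for scalars $a,b$ and densities $\rho_1,\rho_2$ the set functions $B\mapsto\int_B P_\tau\circ(a\rho_1+b\rho_2)\,dx$ and $B\mapsto\int_B(aP_\tau\circ\rho_1+bP_\tau\circ\rho_2)\,dx$ agree on every $B$, forcing the integrands to coincide almost everywhere. Non-negativity is equally direct: if $\rho\ge 0$ then the right-hand side of~\eqref{eqn:transfer} is non-negative for every $B$, so $\int_B P_\tau\circ\rho\,dx\ge 0$ for all $B$, which is possible only if $P_\tau\circ\rho\ge 0$ almost everywhere. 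Integral invariance follows by choosing $B=X$: because $S_\tau$ maps $X$ onto itself bijectively, $S_\tau^{-1}(X)=X$, and~\eqref{eqn:transfer} collapses to $\int_X P_\tau\circ\rho(x)\,dx=\int_X\rho(x)\,dx$.

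The substantive step is the existence of a fixed point $\rho^*$ together with the convergence $P_\tau^n\circ\rho\to\rho^*$, and I expect this to be the main obstacle. For existence I would invoke the Krylov--Bogolyubov theorem: boundedness of~\eqref{eq:system} confines the dynamics to a compact set, and continuity of $S_\tau$ then guarantees at least one $S_\tau$-invariant Borel probability measure $\mu_{\rho^*}$, whose Radon--Nikodym density $\rho^*$ satisfies $P_\tau\circ\rho^*=\rho^*$ by~\eqref{eqn:transfer} when the measure is absolutely continuous. The convergence claim, however, is false for a general measure-preserving $P_\tau$: a merely constrictive operator, as in Lemma~\ref{thm:spec_decom}, cyclically permutes finitely many densities $g_i$, so $P_\tau^n\circ\rho$ need only be asymptotically periodic rather than convergent. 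To obtain genuine convergence to a single $\rho^*$ one must strengthen the hypothesis to asymptotic stability --- equivalently, that the permutation $\alpha$ of Lemma~\ref{thm:spec_decom} is trivial, or physically that the attractor supports a unique mixing invariant density. Under that assumption the decay estimate~\eqref{eq:decay}, $\|P_\tau^n Q\circ\rho(x)\|\to 0$, combined with the decomposition~\eqref{eq:spectral} yields $P_\tau^n\circ\rho\to\rho^*$ in $\mathcal{L}_1$. I would therefore present the first three bullets as elementary consequences of~\eqref{eqn:transfer} and attribute the fourth to the asymptotically stable constrictive setting of~\cite{lasota2013chaos}, flagging that it is exactly this mixing requirement, not the transfer identity, that carries the analytic weight.
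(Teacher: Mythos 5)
The paper itself supplies no argument for this proposition---it is quoted wholesale from Lasota and Mackey \cite{lasota2013chaos}---so the comparison is really with the standard treatment in that reference, and your reconstruction of the first three bullets follows it faithfully. Defining $P_\tau$ through the transfer relation $\int_B P_\tau\circ\rho(x)\,dx=\int_{S_\tau^{-1}(B)}\rho(x)\,dx$ and reading off linearity, positivity, and integral invariance (the last by taking $B=X$) is exactly how these properties are obtained in the cited text; your observation that agreement of the induced set functions on all Borel sets forces almost-everywhere equality of the integrands is the correct way to pass from the integrated identity to the pointwise statements.

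The substantive content of your review is the fourth bullet, and your flag there is accurate: as stated, the claim does not follow from boundedness of \eqref{eq:system} alone, and it is strictly stronger than what the first three properties (which make $P_\tau$ a Markov operator) can deliver. Krylov--Bogolyubov yields an invariant \emph{measure}, but nothing forces it to be absolutely continuous, so a fixed density $\rho^*\in\mathcal{L}_1$ need not exist; and even granting constrictiveness, the paper's own Lemma~\ref{thm:spec_decom} shows that $P_\tau$ \emph{permutes} the basis densities $g_i(x)$, so $P_\tau^n\circ\rho$ is in general only asymptotically periodic, converging to a single $\rho^*$ only when the permutation $\alpha$ is trivial (equivalently, under asymptotic stability or mixing of $P_\tau$). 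The paper's own illustrative example makes the point concrete: the Van der Pol oscillator of Figure~\ref{fig:five-dim-sys} concentrates mass on a limit cycle of Lebesgue measure zero, so no absolutely continuous invariant density exists and $P_\tau^n\circ\rho$ cannot converge in $\mathcal{L}_1$. So your proof is correct where the proposition is correct, and where you deviate from it you have identified a genuine overstatement in the proposition (and implicitly in the paper's appeal to it via \eqref{eqn:rho-star}), not a gap in your own argument; the honest fix is precisely the one you propose, namely adding asymptotic stability of $P_\tau$ as a hypothesis for the fourth bullet.
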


\newpage

\end{document}